\documentclass{article}

\usepackage[preprint]{neurips_2026}

\usepackage[utf8]{inputenc} 
\usepackage[T1]{fontenc}    
\usepackage{hyperref}       
\usepackage{url}            
\usepackage{booktabs}       
\usepackage{amsfonts}       
\usepackage{nicefrac}       
\usepackage{microtype}      
\usepackage{xcolor}         
\usepackage{graphicx}
\usepackage{amsmath}
\usepackage{amssymb}
\usepackage{amsthm}

\usepackage{algorithm}
\usepackage{algorithmic}

\theoremstyle{plain}

\newtheorem{thm}{Theorem}

\theoremstyle{definition}
\newtheorem{defi}{Definition}

\theoremstyle{remark} 

\title{Bayesian Latent Space Models\\ for Graphs Are Misspecified:\\Toward Robust Inference via Generalized Posteriors}

\author{%
  Aldric Labarthe \\
  Centre Borelli, Université Paris-Saclay\\
  Gif-Sur-Yvette, FR 91190 \\
  Department of Computer Science, University of Geneva\\
  Carouge, CH-1227 \\
  \texttt{aldric.labarthe@ens-paris-saclay.fr} \\
}

\begin{document}

\maketitle

\begin{abstract}
Bayesian latent space models offer a principled approach to network representation, but rely on correct specification of both geometry and link function. Real-world networks often violate these assumptions, exhibiting geometric mismatch and structural anomalies that break standard metric properties. We show that such misspecification pushes the data-generating distribution outside the model class, causing Bayesian inference to become overconfident and poorly calibrated. To address this, we propose a generalized posterior framework for random geometric graphs. We introduce Link-Sequential R-SafeBayes, a method that exploits dyadic conditional independence to estimate prequential risk and adaptively tune posterior regularization. Experiments on synthetic and real-world networks demonstrate improved calibration, better link prediction performance, and a reliable criterion for selecting latent geometries across Euclidean, spherical, and hyperbolic spaces.
\end{abstract}

\section{Introduction}
Latent space models have become a cornerstone of network science and graph machine learning, offering to translate graph structures into continuous geometric representations. Since the foundational work of \citet{hoff2002latent}, embedding nodes into a metric space---where connection probabilities decay with pairwise distance---has proven highly effective for modeling homophily, transitivity, and community structure. More recently, the field has recognized the impact of the underlying manifold's geometry, with non-Euclidean spaces, particularly hyperbolic \citep{krioukov2010hyperbolic} and spherical geometries, being leveraged to capture scale-free degree distributions and hierarchical graph topologies. A parallel advancement has been the Bayesian formulation of these models, which provides principled uncertainty quantification for downstream tasks such as link prediction and network reconstruction \citep{newman2018network}.

Despite these successes, standard Bayesian latent space models operate under a perilous assumption: that the true data-generating process perfectly aligns with the chosen parametric model and geometry. In practice, real-world networks are notoriously noisy and inherently violate strict metric axioms. For instance, the presence of dense hubs can exceed the packing capacity of a Euclidean space, while heterophilic connections violate the strict monotonicity of distance-based link functions. In this paper, we argue that Bayesian Random Geometric Graph (RGG) models are almost universally misspecified. 

Under the standard Bayesian paradigm, model misspecification is not benign. When the true data-generating distribution lies outside the pure model class, standard Bayesian updating can lead to overconfidence and inconsistent inference, concentrating the posterior mass on a suboptimal parameter set that yields sub-optimal predictive log-loss \citep{grunwald2017safebayes}. Specifically, we demonstrate theoretically that geometric mismatch forces the Kullback-Leibler minimizer out of the pure model space and strictly into its convex hull. Consequently, any single point estimate or standard posterior sample is compelled to make mispredictions (e.g., assigning zero probability to an observed edge that violates the triangle inequality).

To address misspecification, we propose a robust generalized posterior framework. Because standard marginal likelihoods are unreliable under misspecification, we design \textit{Link-Sequential R-SafeBayes}, an algorithm that sequentially assimilates randomized dyadic blocks to dynamically evaluate prequential risk and select the optimal degree of posterior regularization.

The primary contributions of this work are threefold:
\begin{itemize}
    \item \textbf{Theoretical formalization of geometric misspecification.} We prove that both packing-capacity violations (Theorem~\ref{thm:misspecification_geometry}) and link-function contradictions (Theorem~\ref{thm:misspecification_linkfct}) drive the true network distribution into the convex hull of the model class, formalizing why standard Bayesian RGGs fail.
    \item \textbf{A robust inference framework for networks.} We introduce a generalized $\eta$-posterior for RGGs and develop Link-Sequential R-SafeBayes, adapting the prequential risk framework to conditionally independent dyadic data to safely learn the optimal learning rate.
    \item \textbf{Empirical validation and geometric model selection.} Through extensive experiments on synthetic and real-world networks, we demonstrate that our framework not only guards against overconfidence and improves downstream squared-loss for link prediction, but its prequential log-loss inherently serves as a robust criterion for identifying the most appropriate underlying geometry (Euclidean, Hyperbolic, or Spherical) for a given network.
\end{itemize}

\section{Related Works}

\paragraph{Random graph models and latent geometry.}
Early probabilistic models of networks, such as \cite{erd6s1960evolution} and \cite{gilbert1959random}, assume independence between dyads and ignore higher-order structure. While analytically tractable, these models fail to capture key empirical properties such as clustering and community structure. A major advance came with the introduction of latent space models \citep{hoff2002latent}, where nodes are embedded in a metric space and edge probabilities depend on pairwise distances. In parallel, random geometric graphs (RGG~\cite{penrose_random_2003}) formalized the role of geometry in network formation, providing a natural framework to relate spatial proximity to connectivity. Subsequent work established deep links between geometry and higher-order network properties, including clustering~\citep{krioukov2016clustering} and community emergence~\citep{zuev2015emergence}. Moreover, discrete curvature notions, such as Ricci curvature, have been shown to control the presence and frequency of triangles in graphs~\citep{lin2011ricci, jost2014ollivier}, further reinforcing the interplay between geometry and topology.

\paragraph{Hyperbolic and non-Euclidean embeddings.}
A growing body of work argues that non-Euclidean geometries, in particular hyperbolic spaces, provide a more faithful representation of complex networks. The seminal works of~\citet{krioukov2009curvature, krioukov2010hyperbolic} showed that heterogeneous degree distributions and strong clustering naturally emerge from negative curvature. This perspective has been supported both theoretically and empirically, with further analyses of hyperbolic embeddings~\citep{sala2018representation} and demonstrations that many real-world networks exhibit hyperbolic structure based on Laplacian spectral properties~\citep{smith2019geometry}. Alternative embedding methods have explored metric properties of graphs~\citep{verbeek2014metric}, deterministic constructions in the Poincaré ball~\citep{keller2020hydra, chowdhary2018improved}, and algorithmic benefits such as efficient shortest-path computation. Hyperbolic latent spaces have proven particularly effective for link prediction~\citep{kitsak2020link} and network comparison~\citep{asta2015geom}, while theoretical results confirm that heavy-tailed degree distributions and clustering arise as direct consequences of the hyperbolic metric~\citep{gugelmann2012random}. Beyond Riemannian settings, Lorentzian geometry has also been proposed to embed directed acyclic graphs by leveraging its causal structure~\citep{clough2017embedding}.

\paragraph{Bayesian latent space models and uncertainty.}
The probabilistic formulation of latent space models was pioneered by~\citet{hoff2002latent}, who introduced a Bayesian framework with a logistic link function in Euclidean space, later extended for scalability by~\citet{raftery2012fast}. However, most embedding methods—Bayesian or otherwise—return point estimates and largely ignore uncertainty, despite its critical impact on downstream tasks such as link prediction. Early Bayesian approaches to random geometric graphs and network reconstruction~\citep{newman2018network} emphasized the presence of measurement errors in network data, an idea further developed to account for heterogeneous noise~\citep{young2020bayesian, peixoto2018reconstructing} and community detection~\citep{vanderpas2018bayescommdetect}. 

A key challenge in Bayesian latent space models is non-identifiability: latent coordinates are only defined up to isometries (e.g., Procrustes transformations). This issue has been addressed through anchoring strategies~\citep{papamichalis2021latent} or by explicitly modeling symmetries and multimodal posteriors, particularly in hyperbolic settings~\citep{lizotte2025symmetry}. More broadly, Bayesian methods have also been used to infer geometric properties such as manifold curvature and dimension, even outside strict latent position models~\citep{lubold2023identifying}.

\paragraph{Limitations and gap.}
Despite these advances and the growing interest in Bayesian latent space models, existing approaches largely assume that the chosen geometric space and link function are correctly specified. In practice, however, real-world networks are generated by mechanisms that are only imperfectly captured by any single latent space model. While uncertainty in parameters has been studied, uncertainty induced by model misspecification—particularly in the joint choice of geometry and link function—remains largely overlooked. This gap motivates our work, which explicitly addresses misspecification in random geometric graph models and proposes a Bayesian framework robust to such deviations.

\section{Methods}
\begin{figure}[!ht]
\centering
\includegraphics[width=\linewidth]{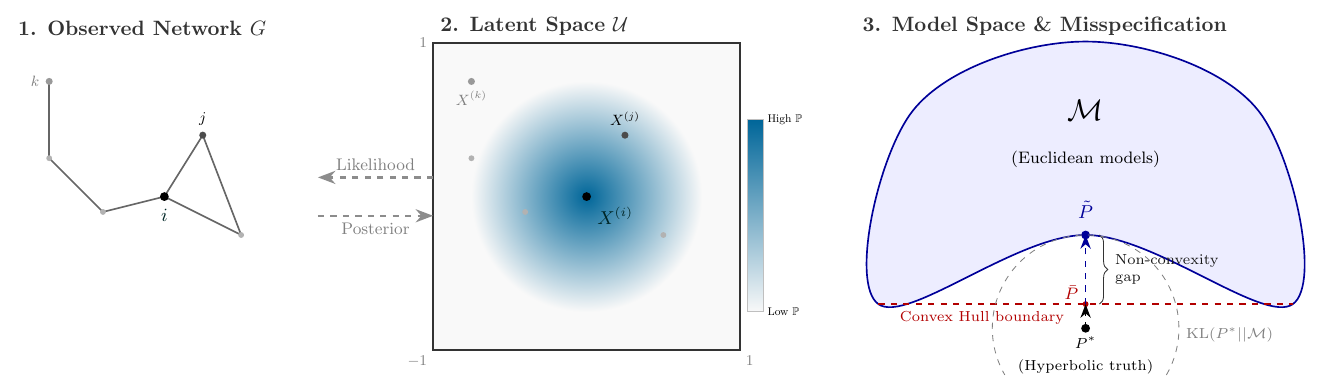}
\caption{\textbf{Conceptual framework of geometric misspecification in Bayesian RGGs.} (\textbf{1} and \textbf{2}) An observed network $G$ and its Latent positions $X$ in space $\mathcal{U}$ inferred via the likelihood $p(G|X)$ and the posterior $p(X|G)$. (\textbf{3}) Theoretical illustration of model misspecification: the model set $\mathcal{M}$ (e.g., Euclidean RGGs) is inherently non-convex. When the ground truth $P^*$ (e.g., hyperbolic) lies outside $\mathcal{M}$, the distribution $\tilde{P}$ that minimizes KL-divergence resides strictly in $\operatorname{conv}(\mathcal{M})$ rather than in $\mathcal{M}$ (Theorems \ref{thm:misspecification_geometry} and \ref{thm:misspecification_linkfct}). }
\label{fig:misspecification_framework}
\end{figure}
\subsection{Misspecification of existing models}

\begin{defi}[The General RGG Model $\mathcal{M}$]
Let $G = (V, E)$ be an undirected, unweighted graph where $V = \{1, \dots, N\}$. The observable data space is $\mathcal{Y} = \{0, 1\}^{\binom{N}{2}}$, representing the adjacency matrix $A$.
We define a parametric model $\mathcal{M} = \{ P_{\theta} : \theta \in \Theta \}$ governed by:
\begin{enumerate}
    \item \textbf{A Metric Space:} $(\mathcal{Z}, d_{\mathcal{Z}})$, where latent coordinates are $Z = \{z_1, \dots, z_N\} \in \mathcal{Z}^N$.
    \item \textbf{A Link Function:} $f: \mathbb{R}_{\geq 0} \to [0, 1]$ is a strictly monotonically decreasing function.
    \item \textbf{The Likelihood:} Assuming conditional independence of dyads given $Z$:
    $$
    P_{\theta}(A) = \prod_{i<j} f(d_{\mathcal{Z}}(z_i, z_j))^{a_{ij}} (1 - f(d_{\mathcal{Z}}(z_i, z_j)))^{1 - a_{ij}}
    $$
\end{enumerate}
where the parameter vector is $\theta = (Z, f)$.
\end{defi}

Let $P^*$ be the true data-generating distribution such that $P^* \notin \mathcal{M}$. Let $\text{conv}(\mathcal{M})$ denote the convex hull of $\mathcal{M}$, consisting of all finite mixture distributions $P_{\text{mix}} = \sum_{k=1}^K w_k P_{\theta_k}$ where $\sum w_k = 1$. 

In this paper, we claim that this class of models is misspecified in all real-world settings, and that this misspecification can affect the quality of the learned representation. Misspecification comes from two sources: the geometry of the space and the link function. 

The model is misspecified in the \citet{grunwald2017safebayes} sense (leading to standard Bayesian inconsistency) if the distribution $\tilde{P}$ that minimizes the Kullback-Leibler divergence from $P^*$ lies strictly outside the pure model set:
$$
\tilde{P} = \underset{P \in \text{conv}(\mathcal{M})}{\operatorname{argmin}} D_{KL}(P^* \parallel P) \implies \tilde{P} \notin \mathcal{M}
$$

\begin{thm}\label{thm:misspecification_geometry}
If the true distribution $P^*$ generates a geometry whose volume growth exceeds the packing capacity of the model metric space $(\mathcal{Z}, d_{\mathcal{Z}})$, the KL-minimizer resides strictly in the convex hull, $\tilde{P} \notin \mathcal{M}$.
\end{thm}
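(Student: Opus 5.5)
The plan is to argue by contradiction: assume the KL-minimizer $\tilde{P}$ over $\operatorname{conv}(\mathcal{M})$ equals a pure model $P_{\theta_0}$ with $\theta_0=(Z_0,f_0)$, and show that some mixture strictly lowers $D_{KL}(P^*\parallel\cdot)$. Since $P^*$ is fixed, minimizing $D_{KL}(P^*\parallel P)$ over the convex set $\operatorname{conv}(\mathcal{M})$ is the same as maximizing the concave functional $L(P)=\mathbb{E}_{P^*}[\log P(A)]$. I would therefore use a first-order (Gâteaux) optimality condition on the segment $P_t=(1-t)P_{\theta_0}+tP_{\theta_1}$. We have $\frac{d}{dt}L(P_t)\big|_{t=0^+}=\mathbb{E}_{P^*}\!\left[\frac{P_{\theta_1}(A)}{P_{\theta_0}(A)}\right]-1$. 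So it suffices to exhibit one $\theta_1$ with $\mathbb{E}_{P^*}[P_{\theta_1}/P_{\theta_0}]>1$; then $\tilde P\neq P_{\theta_0}$, and so $\tilde P\notin\mathcal{M}$. (If $P_{\theta_0}$ gives zero mass to some $A$ in the support of $P^*$, then $L(P_{\theta_0})=-\infty$ and any mixture with a model covering that $A$ already wins.)

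\textbf{Formalizing the hypothesis.} I would make ``volume growth exceeds packing capacity'' concrete as follows. $P^*$ puts positive mass on a family of structures, for instance a star or tree-like configuration $S$ with a hub of degree $k$ whose leaves are pairwise non-adjacent. For fixed $f$, no configuration $Z\in\mathcal{Z}^N$ can make $P_\theta$ assign probability close to the $P^*$-conditional pattern on all these dyads simultaneously. Concretely, the hub edges require $d(z_h,z_\ell)\le r$, and the leaf non-edges require pairwise distances $>r'$. When $k$ exceeds the packing number of a ball of radius $r$ at separation $r'$ in $(\mathcal{Z},d_{\mathcal{Z}})$ (polynomial in Euclidean space), this is impossible. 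The same applies to several such structures, say $S_1$ and $S_2$, placed on overlapping node sets with incompatible metric demands. The key consequence is a \emph{separation property}: for any single $\theta$, the vector of marginals $(P_\theta(a_{ij}=1))_{i<j}$ cannot match $P^*$ on both $S_1$ and $S_2$. Each, however, can be fitted well by some $\theta_1$ and $\theta_2$ respectively.

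\textbf{Main step.} Take $\theta_0$ as the putative optimum. By the separation property, $P_{\theta_0}$ must underweight some configuration region $B$ (say the one where $S_1$ is realized) that has $P^*(B)=\beta>0$. Choose $\theta_1$ fitted to $S_1$, so that $P_{\theta_1}(A)/P_{\theta_0}(A)\ge c$ on $B$ with $c$ large; this is where the packing gap gives a quantitative factor, since $P_{\theta_0}$ must assign the violated dyads probabilities bounded away from their $P^*$ values. The goal is $\mathbb{E}_{P^*}[P_{\theta_1}/P_{\theta_0}]\ge \beta c>1$. I expect this quantitative step to be the main obstacle, because the packing violation has to be turned into a likelihood-ratio gap bounded below uniformly over all $f$ (including steep, near-threshold links). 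I would first reduce to the case where $f_0$ is strictly monotone and continuous. I would then use compactness of the relevant distance range to get a uniform $\epsilon$ such that at least one violated dyad has $|f_0(d)-p^*|\ge\epsilon$. Taking products over $m$ disjoint copies of the structure inflates the ratio to $(1+\epsilon')^m$, which exceeds $1/\beta$ for large $N$. If a clean quantitative bound fails, the fallback is the weaker but sufficient qualitative statement: strict concavity of $\log$ together with $P_{\theta_1}\neq P_{\theta_0}$ on a $P^*$-positive set. Because $\mathcal{M}$ is non-convex there, this gives a mixture whose $L$ is strictly larger than $L$ at both endpoints, provided $L(P_{\theta_1})$ is close to $L(P_{\theta_0})$. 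That closeness I would obtain by choosing $\theta_1$ as the model optimal on the complementary structure, so that the two are ``equally good but different.''
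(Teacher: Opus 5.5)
Your derivative computation is correct, and the criterion is sharp. Since $\log Q\le\log P_{\theta_0}+(Q-P_{\theta_0})/P_{\theta_0}$, a pure $P_{\theta_0}$ minimizes $D_{KL}(P^*\parallel\cdot)$ over $\operatorname{conv}(\mathcal{M})$ if and only if $\mathbb{E}_{P^*}[P_\theta/P_{\theta_0}]\le 1$ for every $\theta$. This is a different route from the paper's explicit mixture construction. The genuine gap is your main step. Nothing in ``volume growth exceeds packing capacity'' produces a $\theta_1$ that violates this inequality, and your own criterion shows that in general no such $\theta_1$ exists. Take the paper's instance of the hypothesis: $P^*=\delta_A$ with $A=K_{1,M}$ and $M$ beyond the packing bound. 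Then $\mathbb{E}_{P^*}[P_{\theta_1}/P_{\theta_0}]=P_{\theta_1}(A)/P_{\theta_0}(A)\le 1$ whenever $\theta_0$ is optimal among pure states. Equivalently, $P_{\text{mix}}(A)=\sum_k w_kP_{\theta_k}(A)\le\max_k P_{\theta_k}(A)$, so no mixture beats the best pure state.

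This is also where the paper's own proof breaks. Its one-leaf-per-component mixture has per-edge marginals of order $1/M$. But $D_{KL}(\delta_A\parallel P)=-\log P(A)$ depends only on the joint probability of the whole star. That probability is negligible under every component, since each puts $M-1$ required hub edges at effectively infinite distance, so the mixture's log-loss is not bounded. In your proposal the same obstruction appears as an inconsistency. A structure that exceeds packing capacity is one that \emph{no} single $\theta$ fits, yet your separation property needs $S_1,S_2$ that are \emph{each} fitted well by some $\theta_i$.

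Your proposed repairs do not close this gap. First, a marginal mismatch $|f_0(d)-p^*|\ge\epsilon$ on some dyad does not yield a $\theta_1$ with $\mathbb{E}_{P^*}[P_{\theta_1}/P_{\theta_0}]>1$; every pure state has such a mismatch in the point-mass example. Second, replicating the structure over $m$ disjoint copies changes the given $P^*$. It also gains nothing when the copies are independent under $P^*$: the expectation factorizes, so the ratio on $B$ grows like $c_0^m$ while $\beta$ shrinks like $\beta_0^m$. Moreover, a product of well-separated pure states is again essentially a pure state. Third, the strict-concavity fallback needs a distinct $\theta_1$ with \emph{exactly} equal $L$ and $P_{\theta_1}\neq P_{\theta_0}$ on $\operatorname{supp}P^*$; mere closeness in $L$ would have to be quantified against the concavity gain. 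For a degenerate $P^*$ these two requirements contradict each other. Otherwise you would need something like a node symmetry of $P^*$ that $\theta_0$ breaks, and you do not establish one.

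What is actually needed is a hypothesis that $P^*$ spreads mass over at least two configurations that are individually (nearly) realizable but not jointly realizable by one product measure. For example, take $P^*=w\delta_{A_1}+(1-w)\delta_{A_2}$ with $\sup_\theta P_\theta(A_i)=1$. Your criterion then forces $P_{\theta_0}(A_1)\ge w$ and $P_{\theta_0}(A_2)\ge 1-w$. No product measure satisfies both once $A_1,A_2$ differ in at least two dyads. Packing capacity plays no role in that argument. The conclusion is driven by the non-degeneracy of $P^*$, not by the geometric hypothesis as stated.
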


\begin{thm}\label{thm:misspecification_linkfct}
If the true distribution $P^*$ generates edges that violate the strict monotonicity of the model's link function $f$, the KL-minimizer resides strictly in the convex hull, $\tilde{P} \notin \mathcal{M}$.
\end{thm}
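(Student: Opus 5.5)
We prove Theorem~\ref{thm:misspecification_linkfct} by contradiction: assume the KL-minimizer over $\text{conv}(\mathcal{M})$ is a pure model $P_{\theta_0}$ with $\theta_0=(Z_0,f_0)$, and then exhibit a mixture with strictly smaller divergence. For this we first make the hypothesis precise. \emph{Violation of strict monotonicity} will mean that there exist dyads $(i,j)$ and $(k,l)$ such that, under $P^*$, the marginal edge probabilities $p^*_{ij}=P^*(a_{ij}=1)$ and $p^*_{kl}$ are ordered in a way that no single configuration can reproduce. Concretely, $p^*_{ij}>p^*_{kl}$, yet every $Z$ that fits the remaining structure forces $d(z_i,z_j)\ge d(z_k,z_l)$. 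The canonical case is a heterophilic pair that is linked although both endpoints share neighbours far from each other, so that the triangle inequality pushes them apart. Since $f$ is strictly decreasing, any pure $P_\theta$ then satisfies $P_\theta(a_{ij}=1)\le P_\theta(a_{kl}=1)$. The only relevant identity is the decomposition
\[
D_{KL}(P^*\parallel P)=-H(P^*)-\mathbb{E}_{P^*}[\log P(A)],
\]
so minimizing KL over $\text{conv}(\mathcal{M})$ amounts to maximizing expected log-likelihood. This objective is concave in the mixture weights, because $\log$ is concave and $P$ is linear in the weights.

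\textbf{Step 1 (first-order condition).} Take any competitor $P_{\theta_1}\in\mathcal{M}$ and the path $P_t=(1-t)P_{\theta_0}+tP_{\theta_1}$. The one-sided derivative of the expected log-likelihood at $t=0$ is
\[
\mathbb{E}_{P^*}\!\left[\frac{P_{\theta_1}(A)}{P_{\theta_0}(A)}\right]-1 .
\]
It therefore suffices to find $\theta_1$ with $\mathbb{E}_{P^*}[P_{\theta_1}/P_{\theta_0}]>1$. This is the standard characterization of the KL projection onto a convex set. Equivalently, $P_{\theta_0}$ is optimal over the hull only if $\sum_A P^*(A)P_{\theta_1}(A)/P_{\theta_0}(A)\le1$ for all $\theta_1$.

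\textbf{Step 2 (constructing $\theta_1$).} Let $\theta_0$ be the best pure fit. The violation forces it to compromise on the pair $\{(i,j),(k,l)\}$. We choose $\theta_1$ to resolve the pair the other way: move $z_i,z_j$ close together, so that the edge $(i,j)$ gets high probability, at the cost of $(k,l)$ and possibly other dyads. On the event $\{a_{ij}=1\}$, which has positive $P^*$-mass, the ratio $P_{\theta_1}/P_{\theta_0}$ is large whenever $f_0(d_0(z_i,z_j))$ is small. The aim is then to show that the $P^*$-expectation of the ratio exceeds $1$. The key point is that $P^*$ is not a product over dyads with the ordering imposed by $\theta_0$, so the weighted average of likelihood ratios cannot be pinned to at most $1$.

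\textbf{Step 3 (main obstacle).} The hard step is the quantitative inequality in Step 2. Indeed, $\mathbb{E}_{P_{\theta_0}}[P_{\theta_1}/P_{\theta_0}]=1$ exactly, so we must show that $P^*$ tilts mass toward configurations that $\theta_1$ favours relative to $\theta_0$. My first attempt is to restrict to the two-dyad marginal. Both models are products, so the ratio factorizes. The factors for the dyads unaffected by the move cancel in expectation only if we hold the other coordinates fixed; to make this possible I would choose $\theta_1$ differing from $\theta_0$ only in $z_i$ where possible. This reduces the condition to a finite inequality in $p^*_{ij}$, $p^*_{kl}$ and the corresponding model probabilities. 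That inequality holds whenever $p^*_{ij}\ne f_0(d_0(z_i,z_j))$. The fallback, if this cannot be arranged cleanly, is to strengthen the hypothesis so that $P^*$ itself is a two-component mixture of RGGs placing strictly positive mass on incompatible orderings. In that case $\tilde P=P^*\in\text{conv}(\mathcal{M})\setminus\mathcal{M}$ directly, with KL equal to $0$, and no pure model attains $0$ because $P^*\notin\mathcal{M}$.
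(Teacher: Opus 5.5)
\textbf{There is a genuine gap: Steps 2--3 never show that mixing helps, and under your hypothesis it need not.} Your Step~1 is correct and is the right criterion. By concavity, a pure $P_{\theta_0}$ with finite divergence is the KL-minimizer over $\text{conv}(\mathcal{M})$ if and only if $\mathbb{E}_{P^*}[P_{\theta}/P_{\theta_0}]\le 1$ for every $\theta$. Nothing afterwards shows that this inequality fails.

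\emph{Step~2 only states the aim.} The ratio $P_{\theta_1}/P_{\theta_0}$ is a product over \emph{all} dyads, so a large factor at $(i,j)$ says nothing about the whole ratio.

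\emph{The Step~3 reduction does not go through.} Moving $z_i$, or changing $f$, alters every dyad incident to $i$. Since $P^*$ is not a product measure, the expectation depends on the joint law of the whole row under $P^*$, not on $p^*_{ij}$ and $p^*_{kl}$ alone.

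\emph{The claimed sufficient condition is false.} Neither $p^*_{ij}\neq f_0(d_0(z_i,z_j))$ nor the heuristic that a non-product $P^*$ cannot pin the averaged ratio to at most $1$ holds. Let $P^*=\delta_{A^*}$ be a point mass on a fixed graph whose edges and non-edges no configuration can order consistently. An example is the bipartite star $K_{1,6}$ in $\mathbb{E}^2$; a deterministic bipartite $P^*$ is also what the paper uses. Then $\mathbb{E}_{P^*}[P_{\theta_1}/P_{\theta_0}]=P_{\theta_1}(A^*)/P_{\theta_0}(A^*)\le 1$ for every $\theta_1$ once $\theta_0$ is the best pure fit, even though $p^*_{ij}\in\{0,1\}$ generically differs from $f_0(d_0(z_i,z_j))$. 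More directly, $\sum_k w_kP_{\theta_k}(A^*)\le\sup_\theta P_\theta(A^*)$, so no mixture ever beats the pure class on a point mass.

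Hence a monotonicity violation expressed through edge probabilities cannot by itself force $\tilde P\notin\mathcal{M}$. You need an extra hypothesis making $P^*$ genuinely random in a way some mixture tracks better than any single pure state, which amounts to assuming that $\mathbb{E}_{P^*}[P_\theta/P_{\theta_0}]\le 1$ fails. Your fallback $P^*\in\text{conv}(\mathcal{M})\setminus\mathcal{M}$ is of this type, but it replaces the theorem's hypothesis with one that already contains the conclusion.

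\textbf{The paper's route does not fill this gap either.} The paper uses an explicit two-component mixture on a deterministic bipartite $P^*$: $u_1$ (resp.\ $u_2$) sits near $v$ and the other node is sent to infinity. Your Step~1 criterion shows why this fails. Each component sends a node with a true edge to $v$ far away, so both give $(a_{u_1v},a_{u_2v})=(1,1)$ probability about $f(\epsilon)f(\infty)\approx 0$. Consequently $D_{KL}(P^*\parallel P_{\text{mix}})$ blows up rather than dropping below the pure state's divergence. The ``$\frac12$ baseline probability'' is a statement about marginal edge probabilities, not about the joint likelihood that KL measures. Also, because $f$ is a free parameter, a steep strictly decreasing $f$ can separate $\epsilon$ from $2\epsilon$, so the pure state is not actually forced into a large false-positive penalty. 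Importing the paper's construction will therefore not repair your argument; a correct version of the statement needs a strengthened, genuinely stochastic hypothesis on $P^*$.
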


Both theorems are formally proven in \textbf{Appendix~\ref{app:proofs}}. To build intuition for how algebraic misspecification drives the true data-generating distribution into the convex hull, consider a scenario where the ground truth $P^*$ generates edges via a soft-threshold (sigmoid) link function, but the model class $\mathcal{M}$ is rigidly restricted to hard-threshold (step function) distance cutoffs. A single pure state in $\mathcal{M}$ must commit to a strictly discrete radius $R$, predicting connections with absolute certainty inside $R$ and complete impossibility outside $R$. Because $P^*$ will inevitably generate topological noise---edges where $d > R$ and non-edges where $d \le R$---any pure state suffers catastrophic, unbounded log-loss penalties. Conversely, a distribution in the convex hull $\text{conv}(\mathcal{M})$ can easily construct a mixture over varying radii $R_k$ with corresponding weights $w_k$. This ensemble algebraically behaves as a descending staircase, forming a discretized but highly accurate approximation of the true continuous sigmoid curve. Because this mixture completely avoids infinite penalties and gracefully traces the true geometric decay profile, it achieves a strictly lower Kullback-Leibler divergence to $P^*$ than any single step function. Consequently, the KL-minimizer $\tilde{P}$ resides strictly outside the pure model set ($\tilde{P} \notin \mathcal{M}$).

Given that the KL-minimizer $\tilde{P}$ resides outside the pure model set $\mathcal{M}$, standard Bayesian inference is guaranteed to fail. As sample size grows, the standard posterior will concentrate on a single convex combination of pure states rather than on the best in-model approximation of $P^*$.

\subsection{Safe Bayesian Estimation Under Misspecification}
To circumvent this issue, we design an analogue of the \textit{SafeBayes} algorithm \citep{grunwald2017safebayes} on graphs. To do so, we first restrict ourselves to a specific RGG model (while keeping in mind that our arguments are valid no matter this specific choice), then we describe our SafeBayes variant, and finally we conceive a test bench to empirically verify the reality of misspecification.
 
\subsubsection{Our Latent Space Model}
We build upon the latent space model formulation of \citet{hoff2002latent}. We model dyads $A_{ij}$ as conditionally independent Bernoulli variables given latent positions $z_i, z_j \in \mathcal{Z}$. We define a soft-threshold Random Geometric Graph (RGG) connection probability:
\begin{equation}
    P(A_{ij} = 1 \mid z_i, z_j, r, \alpha) = \sigma\big(\alpha (r - d_{\mathcal{Z}}(z_i, z_j))\big)
\end{equation}
where $d_{\mathcal{Z}}(\cdot, \cdot)$ is the geodesic distance, $r \sim \mathcal{HN}(r_{\text{scale}})$ is a global threshold radius, $\alpha$ scales the strictness of the threshold, and $\sigma(\cdot)$ is the sigmoid function. 

To prevent the posterior from concentrating on the wrong solution, we use a generalized $\eta$-posterior:
\begin{equation}
    \pi_{\eta}(Z, r \mid A) \propto \pi(Z) \pi(r) \prod_{i < j} P(A_{ij} \mid z_i, z_j, r)^{\eta}
\end{equation}
where $\eta \in (0, 1]$ is the learning rate. By setting $\eta < 1$, we flatten the posterior, effectively simulating a mixture over pure states in $\text{conv}(\mathcal{M})$ and guarding against unbounded log-loss penalties.

\subsubsection{Link-Sequential R-SafeBayes}

Determining the optimal degree of posterior flattening $\eta$ cannot be done via standard marginal likelihoods, as those are already biased by the model's misspecification. Instead, we adapt the R-SafeBayes algorithm to dynamically select $\eta$ by sequentially evaluating prequential risk \citep{dawid1984prequential}. 

Safe Bayes relies on the independence of observations. While edges in a network are unconditionally dependent, our latent space model guarantees that dyads are \textit{conditionally independent} given $Z$. To leverage this without violating graph structure or leaking data, we partition the dyads into randomized, disjoint blocks $B_1, \dots, B_K$. 

By sequentially predicting connections in unseen blocks, we measure exactly how severely geometric and link-function misspecifications are degrading out-of-sample predictions. For a given $\eta$, we first fit the $\eta$-posterior on the currently observed blocks, then we compute the posterior predictive distribution for the target block $B_k$ and finally we evaluate cumulative predictive log-loss across blocks, capturing the penalties induced by mismatches.

\begin{algorithm}[H]
\caption{Link-Sequential R-SafeBayes for Misspecified Network Models}
\label{alg:rsafebayes}
\begin{algorithmic}[1]
\REQUIRE Adjacency matrix $A \in \{0,1\}^{N \times N}$, Candidate geometries $\mathcal{G}$, Candidate rates $\mathcal{H}$, Initial fraction $\rho$, Blocks $K$
\STATE Extract upper-triangular dyads $\mathcal{D} = \{ (i,j) \mid 1 \leq i < j \leq N \}$
\STATE Randomly shuffle and partition $\mathcal{D}$ into $B_0$ and test blocks $B_1, \dots, B_K$
\FOR{\textbf{each} geometry $\mathcal{Z} \in \mathcal{G}$}
    \FOR{\textbf{each} learning rate $\eta \in \mathcal{H}$}
        \STATE Initialize cumulative risk $\mathcal{R}_{\mathcal{M}}(\eta) \leftarrow 0$, $D_{\text{train}} \leftarrow B_0$
        \FOR{$k = 1$ \textbf{to} $K$}
            \STATE $S \leftarrow \text{Sample}(\pi_{\eta}(Z, r \mid D_{\text{train}} \text{ on } \mathcal{Z}))$ \textit{// NUTS sampling}
            \FOR{\textbf{each} $(i,j) \in B_k$}
                \STATE $\hat{p}_{ij} \leftarrow \frac{1}{|S|} \sum_{(Z, r) \in S} \sigma\big(\alpha (r - d_{\mathcal{Z}}(z_i, z_j))\big)$
            \ENDFOR
            \STATE $L_k \leftarrow - \frac{1}{|B_k|} \sum_{(i,j) \in B_k} \left[ A_{ij} \log \hat{p}_{ij} + (1 - A_{ij}) \log (1 - \hat{p}_{ij}) \right]$
            \STATE $\mathcal{R}_{\mathcal{M}}(\eta) \leftarrow \mathcal{R}_{\mathcal{M}}(\eta) + L_k$
            \STATE $D_{\text{train}} \leftarrow D_{\text{train}} \cup B_k$ \textit{// Prequential update}
        \ENDFOR
    \ENDFOR
    \STATE $\eta^*_{\mathcal{M}} \leftarrow \arg\min_{\eta \in \mathcal{H}} \mathcal{R}_{\mathcal{M}}(\eta)$
\ENDFOR
\RETURN Optimal learning rates $\eta^*_{\mathcal{M}}$ and predictive risks for all $\mathcal{Z} \in \mathcal{G}$
\end{algorithmic}
\end{algorithm}

The cumulative log-loss acts as the prequential posterior risk within the SafeBayes framework, sequentially quantifying the predictive penalty of the model as it assimilates new network blocks. The prequential posterior risk measures the overall model quality, directly reflecting how effectively the learned representation captures the true underlying graph structure and generalizes to unseen data.

While the prequential log-loss drives the learning rate selection, we simultaneously track the cumulative squared-loss (Brier score) across the testing blocks. For practical graph representation, this squared-loss is the primary metric we care about, as it directly dictates the model's reliability for link prediction and downstream predictive tasks. Crucially, as highlighted in the Safe Bayes literature, performing standard Bayesian inference ($\eta = 1$) under severe misspecification forces the model into overconfident estimations that translates to an artificially inflated squared-loss.

\subsubsection{Experimental Settings}

Our experimental pipeline has three objectives. First, it aims at validating the misspecification on empirical datasets by observing the need for $\eta < 1$. Second, it will estimate the square loss gains induced by the $\eta$-regularization for downstream tasks. Third, it will leverage the model-selection criterion of the log-loss and the $\eta$ learning rate to assess whether Safe Bayes is able, as a byproduct of its regularization, to determine the best geometric hypothesis (hyperbolic, spherical, or Euclidean) for the given empirical network.

\paragraph{Candidate geometries.} 
We evaluate the framework across three 2D Riemannian manifolds: (i) \textbf{Euclidean} ($\mathbb{E}^2$), which is highly susceptible to volume-packing misspecification in hierarchical graphs; (ii) \textbf{Hyperbolic} ($\mathbb{H}^2$), capturing exponential volume expansion, implemented via the Lorentz model in $\mathbb{R}^3$ with shielded Taylor expansions to prevent HMC singularities; and (iii) \textbf{Spherical} ($\mathbb{S}^2$), which enforces strict global distance bounds. 

\paragraph{Datasets and evaluation.} 
We benchmark on synthetic and empirical networks. The synthetic topologies isolate specific geometric biases: a 5-community SBM (matching $\mathbb{E}^2$), a Poincaré network ($\mathbb{H}^2$), a spherical network ($\mathbb{S}^2$), and a Core-Periphery model designed to violate metric axioms. All synthetic geometries are generated with a hard-threshold RGG model to incur link function mismatch. We additionally evaluate six empirical networks from KONECT \citep{konect} (Karate, Les Misérables, Dolphins, Moreno train, Iceland, Windsurfers) containing inherent structural noise uncapturable by pure-state models. To trace the posterior's descent into the convex hull, R-SafeBayes evaluates a discrete grid of learning rates $\eta \in \{0.1, 0.2, \dots, 1.0\}$ across all dataset-geometry pairs.

\paragraph{Sampling and Alignment}
Inference is conducted via the No-U-Turn Sampler (NUTS) using JAX and NumPyro. Because RGG likelihoods are fully invariant under isometries, the un-tempered posterior is notoriously multimodal. Rather than artificially breaking these symmetries (which creates false constraints), we allow NUTS to explore freely. The resulting predictive distribution $\hat{p}_{ij}$ remains strictly invariant to translation and rotation. For downstream visualization, we utilize post-hoc Orthogonal Procrustes alignment. To maintain numerical stability while sampling the tempered posterior, we apply mathematically equivalent log-sigmoid reformulations ($\log(1 - \sigma(x)) = \log(\sigma(-x))$) and parallel \texttt{pmap} dispatch, ensuring robust convergence across all misspecified regimes.

\section{Results}
\subsection{The Pervasiveness of Geometric Misspecification}
As theorized in our framework, the assumption that the true data-generating distribution resides within the pure model class ($P^* \in \mathcal{M}$) is systematically violated across empirical network topologies. Table~\ref{tab:results_summary} and Figure~\ref{fig:losses} reveal a striking consistency: across all real-world networks the optimal learning rate determined by Link-Sequential R-SafeBayes is less than one ($\eta^* < 1.0$). Even the synthetic datasets feature this correction due to link function mismatch. This confirms our theoretical assertion (Theorems~\ref{thm:misspecification_geometry} and~\ref{thm:misspecification_linkfct}): fundamental topological noise, such as packing capacity violations, local heterophily or misspecified link function, invariably drives the information-theoretic projection of $P^*$ into the convex hull of the model class. Consequently, standard Bayesian inference ($\eta=1.0$) is intrinsically misspecified for these models and fails to achieve predictive consistency, necessitating posterior flattening to simulate a convex mixture of pure states.

\begin{figure}[!h]
\centering
\includegraphics[width=\linewidth]{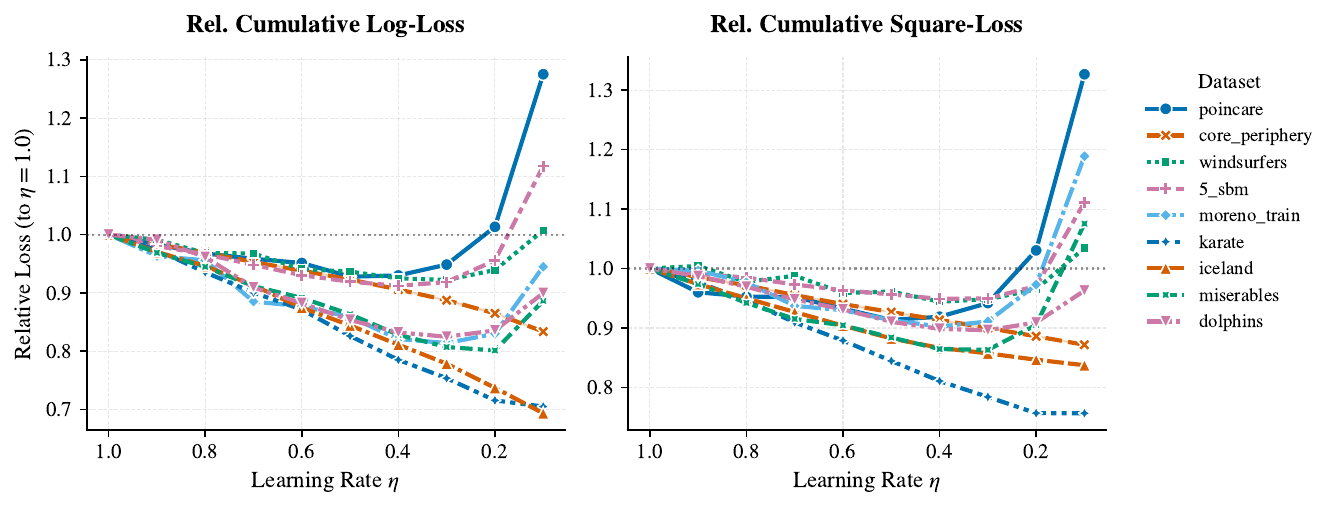}
\caption{\textbf{Predictive Performance vs. Likelihood Weighting $\eta$.} Relative cumulative log-loss (left) and square-loss (right) across networks, normalized to the standard Bayesian baseline ($\eta=1.0$). }
\label{fig:losses}
\end{figure}

To demonstrate that the optimal learning rate $\eta^*$ reflects the intrinsic geometric misspecification of the network rather than an artifact of the prequential partitioning scheme, we conduct a sensitivity analysis on the Link-Sequential R-SafeBayes hyperparameters. Figure \ref{fig:hyperparameters} illustrates the prequential log-loss trajectories on the Dolphins network (Euclidean model) across independent ablations of the evaluation block count ($K$), the initial training fraction ($p_{train}$), and randomized data orderings. While varying $K$ naturally scales the absolute cumulative risk—as log-loss penalties are integrated over different sequential horizons—the global minimum remains strictly anchored at $\eta^* = 0.3$. Similarly, adjusting the initial training fraction shifts the baseline loss but preserves the identical learning rate minimum. Finally, the stability across randomized shufflings (right panel) demonstrates that the prequential risk surface is invariant to dyadic ordering, with the mean and confidence interval ($\pm 1$ standard deviation) converging tightly at the same optimal $\eta$. This invariant behavior confirms that the necessity of posterior flattening ($\eta < 1$) is a robust, structural imperative driven by the geometric mismatch formalized in Theorems 1 and 2, validating the diagnostic reliability of our framework regardless of the specific sequential evaluation schedule.

\begin{figure}[!h]
\centering
\includegraphics[width=\linewidth]{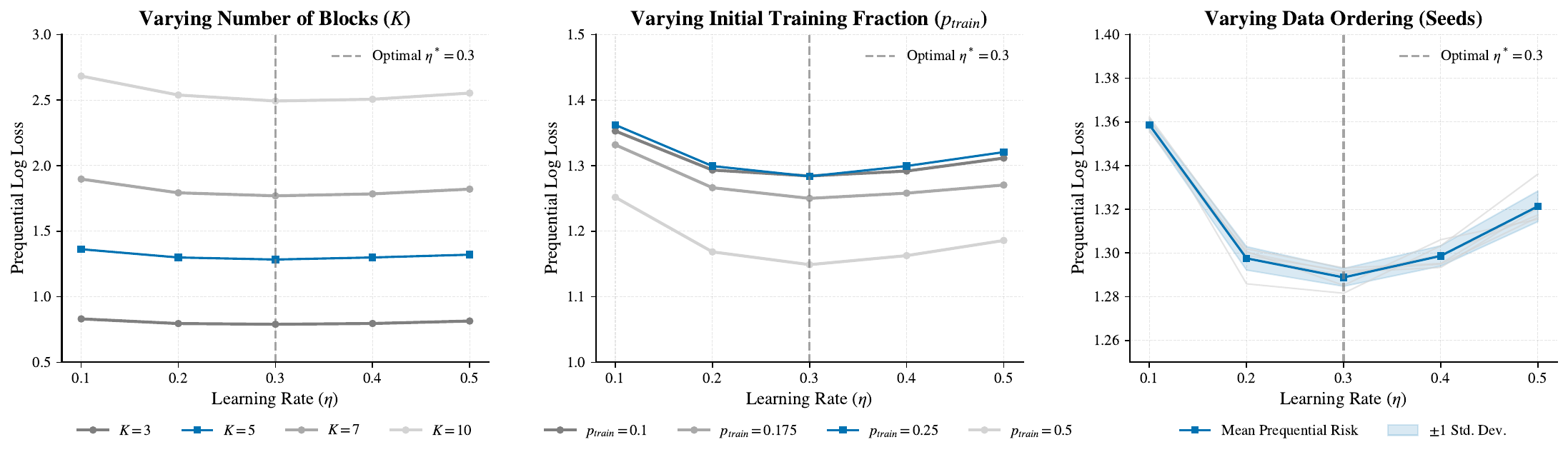}
\caption{\textbf{Robustness of Link-Sequential R-SafeBayes to prequential hyperparameters and data ordering (Dolphins network).} \textit{(Left)} Varying the number of evaluation blocks $K$. \textit{(Middle)} Adjusting the initial training fraction $p_{train}$. \textit{(Right)} Individual empirical trajectories across six randomized dyadic shufflings (grey) and their mean (blue) with $\pm 1$ standard deviation.}
\label{fig:hyperparameters}
\end{figure}

\subsection{Mitigating Overconfidence and Improving Predictive Risk}
The dynamic regularization imposed by R-SafeBayes directly translates into substantial downstream performance gains, evaluated via both predictive log-loss and cumulative squared-loss (Brier score). As illustrated in Figure~\ref{fig:overfitting}, standard Bayesian Random Geometric Graphs (RGGs) reliably fall into an overfitting trap, heavily optimizing for in-sample dyadic reconstruction at the direct expense of out-of-sample predictive validity. By sequentially assimilating data blocks to tune $\eta$, SafeBayes successfully escapes this degenerate concentration, identifying a global minimum that yields, in the Euclidean case, an average $15.6\%$ reduction in log-loss and an $11.6\%$ reduction in squared-loss.

\begin{figure}[!ht]
\centering
\includegraphics[width=\linewidth]{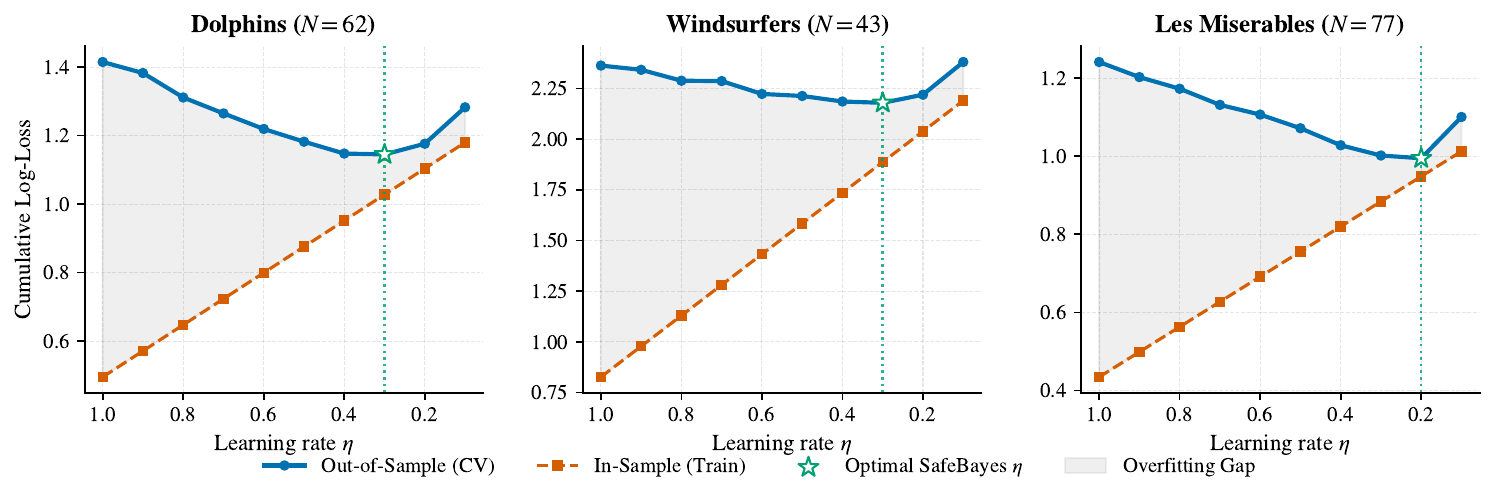}
\caption{\textbf{SafeBayes Overfitting correction} evaluated in the Euclidean case. The shaded area represents the overfitting gap where Standard Bayes ($\eta=1.0$) minimizes in-sample error at the cost of sub-optimal out-of-sample prediction loss. SafeBayes ($\eta < 1.0$) identifies the global minimum, achieving an average $15.6\%$ reduction in log. loss and $11.6\%$ in sq. loss.}
\label{fig:overfitting}
\end{figure}

Crucially, these performance gains do not emerge uniformly across all predictions; rather, they stem from the mitigation of catastrophic predictive failures. Figure~\ref{fig:improvements_by_values} dissects the distribution of these predictive improvements, revealing that standard Bayes forces the model into overconfident, erroneous density estimations on topologically noisy dyads. Under misspecification, a pure metric state is forced to make "sure-but-wrong" predictions (e.g., assigning negligible probability to an observed edge that violates the triangle inequality). By flattening the posterior, SafeBayes adopts a defensive posture. While it incurs a minor regularization tax on easily predictable, low-error dyads, this cost is vastly eclipsed by the massive reduction in predictive penalties in the high-error regime ($\text{Error} > 0.8$). By effectively recognizing its own epistemic uncertainty under structural mismatch, the model avoids unbounded loss penalties.

\begin{figure}[!ht]
\centering
\includegraphics[width=\linewidth]{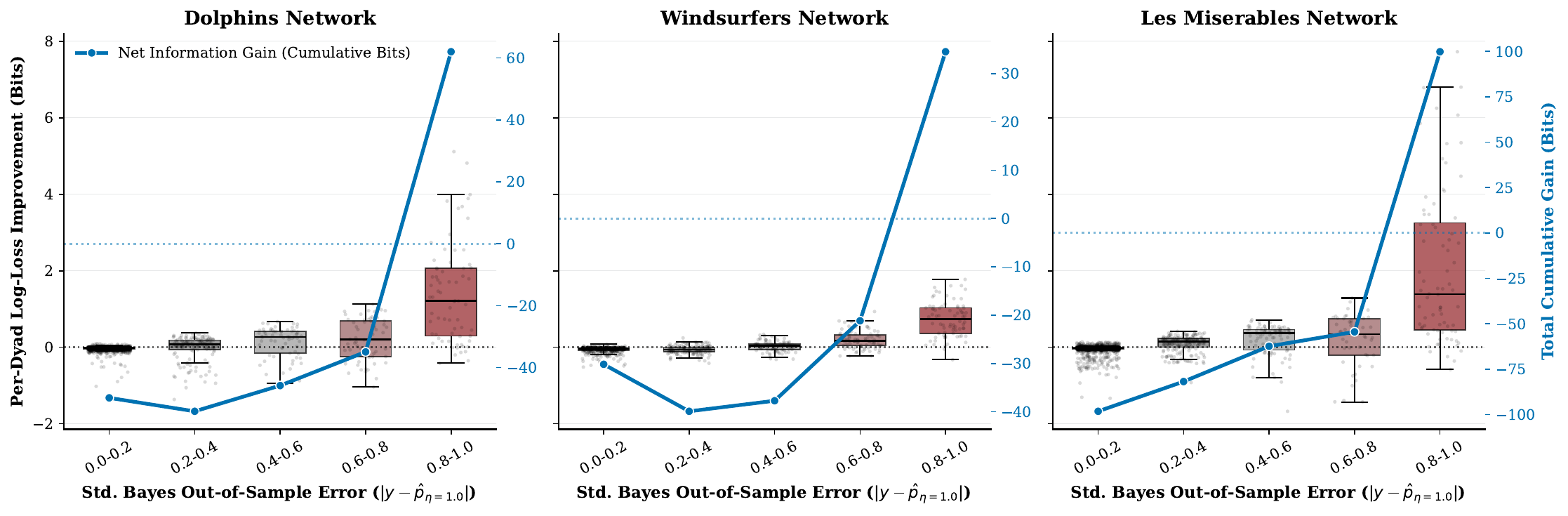}
\caption{\textbf{Distribution of Predictive Improvement.} SafeBayes mitigates catastrophic ``Sure-but-Wrong'' predictions inherent in Euclidean latent space models. While a minor regularization tax is observed in low-error dyads, the cumulative information gain (blue line) is driven by massive penalty reductions in the high-error regime ($\text{Error} > 0.8$).}
\label{fig:improvements_by_values}
\end{figure}

\subsection{Robust Geometric Selection via Prequential Risk}
Beyond guarding against overconfidence, the prequential log-loss evaluated by R-SafeBayes serves as a highly reliable, unsupervised model-selection criterion for uncovering a network's underlying latent manifold. By sequentially measuring the out-of-sample predictive penalties incurred by each geometry, the framework distinguishes between Euclidean, Spherical, and Hyperbolic spaces without relying on heavily biased marginal likelihoods. 

Table~\ref{tab:results_summary} demonstrates the efficacy of this criterion. The framework correctly identifies the true underlying geometries of our synthetic benchmarks, matching the Spherical dataset to $\mathbb{S}^2$, the Poincaré model to $\mathbb{H}^2$ and the $5$-SBM to $\mathbb{E}^2$, even if the link function was misspecified. Most notably, this selection criterion exhibits profound robustness to isometric variations and specific coordinate representations. Evaluated on the synthetic \textit{poincare} dataset—which generates connections according to the Poincaré disk metric—Link-Sequential R-SafeBayes conclusively selects the Lorentz Model (Hyperboloid) as the optimal geometry over both Spherical and Euclidean alternatives. Even though the Hyperboloid and Poincaré disk are distinct isometries (leading to disparate Hamiltonian dynamics and inference pathways), the prequential risk correctly recognizes the fundamental signature of negative curvature and exponential volume expansion. This proves that the SafeBayes log-loss intrinsically recovers the optimal global geometry based on volumetric scaling properties, rendering it a uniquely robust tool for manifold selection in network science.

\section{Limitations}
\paragraph{Computational complexity.}
The most significant limitation of our approach is its computational cost. Standard Bayesian inference for latent space models already scales at $\mathcal{O}(N^2)$ with respect to the number of nodes due to the dense dyadic likelihood evaluation. The prequential nature of our R-SafeBayes algorithm strictly exacerbates this bottleneck. To determine the optimal learning rate $\eta$, the sampler (NUTS) must be executed repeatedly across a discrete grid of candidate rates $\mathcal{H}$, multiple geometric candidate spaces $\mathcal{G}$, and sequentially across $K$ data blocks. While parallel hardware dispatch accelerates this process, the overall $\mathcal{O}(K \cdot |\mathcal{H}| \cdot |\mathcal{G}| \cdot N^2)$ complexity currently restricts the framework's direct application to massive networks. Scaling this robust inference framework to networks with hundreds of thousands of nodes will likely require adapting Generalized Variational Inference (GVI) to the prequential SafeBayes objective, a task proven to be difficult as GVI has been reported inaccurate in non-Euclidean Bayesian inference \citep{lizotte2025symmetry}.

\paragraph{Sampling dynamics on non-Euclidean manifolds.}
A second critical limitation stems from the inherent difficulty of MCMC sampling on complex geometries, particularly when the chosen manifold is heavily misspecified for the observed network. While tempering the posterior via the learning rate $\eta$ actively aids sampling dynamics by flattening the target distribution and easing traversal between modes, Markov chains can still exhibit pathological behaviors, during inference on difficult or heavily mismatched geometries. Some of our runs yielded non-conclusive convergence diagnostics, including inflated $\hat{R}$ statistics and low bulk effective sample sizes (ESS Bulk). Despite our use of the Lorentz model and shielded Taylor expansions, the broader Bayesian literature still lacks robust, universally stable sampling techniques tailored to unconstrained exploration on non-Euclidean manifolds.

\section{Conclusion}

In this paper, we formalize the inherent structural misspecification of Bayesian latent space models applied to empirical graphs. We demonstrate that fundamental topological properties of real-world networks---specifically, volumetric scaling regimes that exceed the packing capacity of the chosen Riemannian manifold and local heterophily that violate the strict monotonicity of the link function---preclude the existence of a well-specified pure state. Under these conditions, we prove that the information-theoretic projection of the true data-generating measure resides strictly within the convex hull of the parametric family ($\tilde{P} \notin \mathcal{M}$). Consequently, standard Bayesian updating fails to achieve predictive consistency; rather than recovering a robust geometric representation, the posterior asymptotically concentrates on degenerate pure states that incur unbounded Kullback-Leibler divergence penalties, leading to severe miscalibration and overconfident density estimation on out-of-sample dyads.

To mitigate these pathological inference dynamics, we deploy a generalized $\eta$-posterior, demonstrating that likelihood tempering operates as an information-theoretic necessity rather than a mere heuristic. By flattening the target measure, the $\eta$-posterior effectively simulates a convex mixture that absorbs structural contradictions without incurring infinite loss. Furthermore, because standard marginal likelihoods are heavily biased under $\tilde{P} \notin \mathcal{M}$, we introduced Link-Sequential R-SafeBayes. This algorithm leverages the conditional independence of dyads to evaluate prequential risk across randomized blocks, offering a statistically sound mechanism to quantify geometric mismatch and dynamically regularize the posterior without data leakage.

Our empirical evaluations underscore the severity of this issue: across multiple real-world topologies, standard Bayesian inference routinely collapses into overconfidence, whereas our prequential framework recovers predictive calibration and significantly improves generalization for downstream link prediction. Crucially, the necessity of such aggressive posterior regularization highlights a broader vulnerability within the graph representation learning literature. The routine deployment of uncalibrated, pure-state geometric models risks encoding fundamental artifacts into latent embeddings. While our adaptation of SafeBayes provides a rigorous diagnostic and mitigation strategy, resolving the underlying computational bottlenecks and pathological sampling dynamics on non-Euclidean manifolds remains an open challenge. Ultimately, this work cautions against the uncritical assumption of geometric specifiedness, urging the community toward inherently robust, uncertainty-aware network inference paradigms.

\bibliographystyle{abbrvnat}
\bibliography{bibliography}

\clearpage
\appendix

\section{Proofs}
\label{app:proofs}

\begin{proof}[Proof of Theorem \ref{thm:misspecification_geometry}]
Assume, for the sake of contradiction, that $\tilde{P} \in \mathcal{M}$. Therefore, there exists a single pure state $\theta^* = (Z^*, f^*)$ that strictly minimizes the expected negative log-likelihood (log-loss):
$$
\mathbb{E}_{A \sim P^*}[-\log P_{\theta^*}(A)]
$$

Let $P^*$ assign probability $1$ to a Star Graph $K_{1, M}$ where node $0$ is the hub and nodes $\{1, \dots, M\}$ are independent leaves. Thus $a_{0i} = 1$ and $a_{ij} = 0$ for all $i, j \in \{1, \dots, M\}$.

To minimize the log-loss of the true edges, the pure state must place all leaves within some small radius $R$ of the hub to maximize $f^*(d_{\mathcal{Z}}(z_0, z_i))$.

Assume the chosen metric space $(\mathcal{Z}, d_{\mathcal{Z}})$ is Euclidean $\mathbb{R}^D$. By the geometric definition of packing density (the Kissing Number bound), there exists a finite maximum number of points $C(D)$ that can be placed around a center while maintaining a mutually bounded separation. 

For any $M \gg C(D)$, by the Pigeonhole Principle and the Triangle Inequality, there must exist at least two leaves $j$ and $k$ that are forced arbitrarily close to each other:
$$
d_{\mathcal{Z}}(z_j, z_k) \le \epsilon
$$

Because $f^*$ is strictly monotonically decreasing, $f^*(\epsilon)$ must be bounded away from $0$ (e.g., $f^*(\epsilon) \ge 1 - \delta$). However, under $P^*$, $a_{jk} = 0$ almost surely. The log-loss penalty for this non-edge is $-\log(1 - f^*(\epsilon))$, which explodes to infinity as $\epsilon \to 0$.

Now, construct a mixture distribution $P_{\text{mix}} \in \text{conv}(\mathcal{M})$:
$$
P_{\text{mix}} = \frac{1}{M} \sum_{k=1}^M P_{\theta_k}
$$
where each pure component $P_{\theta_k}$ maps leaf $k$ perfectly to distance $0$ from the hub, and pushes all other $M-1$ leaves to infinity. 

Under $P_{\text{mix}}$, two leaves $j$ and $k$ are \textit{never} close to the hub simultaneously. Therefore, the distance $d_{\mathcal{Z}}(z_j, z_k) = \infty$ in every mixture component, yielding $P_{\text{mix}}(a_{jk}=1) = 0$. The infinite false-positive penalty is completely averted, while the true edges retain a likelihood of $\frac{1}{M}$.

Because the log-loss of $P_{\text{mix}}$ is strictly bounded and the log-loss of $P_{\theta^*}$ is driven to infinity by the geometric packing contradiction, we have:
$$
D_{KL}(P^* \parallel P_{\text{mix}}) < D_{KL}(P^* \parallel P_{\theta^*})
$$
This contradicts the assumption that $\tilde{P} \in \mathcal{M}$.
\end{proof}

\begin{proof}[Proof of Theorem \ref{thm:misspecification_linkfct}]
Assume, for the sake of contradiction, that $\tilde{P} \in \mathcal{M}$. Therefore, there exists a pure state $\theta^* = (Z^*, f^*)$ that minimizes the expected log-loss.

Let the geometry $(\mathcal{Z}, d_{\mathcal{Z}})$ have infinite packing capacity, isolating the error entirely to the link function. Let $P^*$ generate a strict Bipartite Graph (heterophily). Nodes are partitioned into sets $U$ and $V$. Edges strictly exist between $U$ and $V$, and $a_{u_1 u_2} = 0$ for all $u_1, u_2 \in U$.

Select two nodes $u_1, u_2 \in U$ that both connect to a single node $v \in V$ with probability $1$. To minimize the log-loss for the true edges $(u_1, v)$ and $(u_2, v)$, the pure state $Z^*$ must map them closely:
$$
d_{\mathcal{Z}}(z_{u_1}, z_v) \le \epsilon \quad \text{and} \quad d_{\mathcal{Z}}(z_{u_2}, z_v) \le \epsilon
$$

By the metric axioms of $(\mathcal{Z}, d_{\mathcal{Z}})$, the Triangle Inequality dictates:
$$
d_{\mathcal{Z}}(z_{u_1}, z_{u_2}) \le d_{\mathcal{Z}}(z_{u_1}, z_v) + d_{\mathcal{Z}}(z_v, z_{u_2}) \le 2\epsilon
$$

Because $f^* \in \mathcal{F}$ is strictly monotonically decreasing, if it correctly assigns high probability to distance $\epsilon$, it is mathematically compelled to assign high probability to distance $2\epsilon$. Thus, $P_{\theta^*}(a_{u_1 u_2} = 1)$ is large. But under the true bipartite distribution $P^*$, $a_{u_1 u_2} = 0$. The pure state NLL suffers massive catastrophic penalties for false-positive intra-group predictions.

Now, construct a mixture $P_{\text{mix}} = \frac{1}{2}P_{\theta_1} + \frac{1}{2}P_{\theta_2} \in \text{conv}(\mathcal{M})$:
\begin{itemize}
    \item In $P_{\theta_1}$, place $u_1$ at distance $\epsilon$ from $v$, but push $u_2$ to infinity.
    \item In $P_{\theta_2}$, place $u_2$ at distance $\epsilon$ from $v$, but push $u_1$ to infinity.
\end{itemize}

In this mixture, $u_1$ and $u_2$ are never in the same spatial vicinity in any single model component. The mixture probability $P_{\text{mix}}(a_{u_1 u_2}=1) \to 0$, averting the structural false-positive penalty, while preserving a $\frac{1}{2}$ baseline probability for the true bipartite connections. 

Therefore, $D_{KL}(P^* \parallel P_{\text{mix}}) < D_{KL}(P^* \parallel P_{\theta^*})$, contradicting that $\tilde{P} \in \mathcal{M}$.
\end{proof}

\section{Detailed simulation results}
\begin{table}[h]
\centering
\caption{\textbf{Comparison of Log Losses, Square Losses, and MCMC convergence statistics ($\hat{R}$, ESS) across geometries.} Bold values indicate the best performing model for each dataset.}
\label{tab:results_summary}
\small
\begin{tabular}{ll c rr rrrr}
\toprule
\textbf{Dataset} & \textbf{Geometry} & $\eta$ & \textbf{Log Loss} & \textbf{Sq Loss} & ESS & $\hat{R}$ \\
\midrule
\addlinespace[0.5em]
Poincare & Spherical & 0.3 & 0.951 & 0.250 & 905 & 1.01 \\
 & Euclidean & 0.5 & 0.877 & 0.242 & 580 & 1.01 \\
 & Hyperboloid & 0.2 & \textbf{0.562} & 0.166 & 14 & 1.20 \\
\addlinespace[0.5em]
Spherical & Spherical & 0.8 & \textbf{0.466} & 0.134 & 7 & 1.51 \\
 & Euclidean & 0.6 & 1.104 & 0.297 & 6 & 1.75 \\
 & Hyperboloid & 0.8 & 0.852 & 0.226 & 311 & 1.02 \\
\addlinespace[0.5em]
Core-Periphery & Spherical & 0.1 & 1.829 & 0.514 & 840 & 1.00 \\
 & Euclidean & 0.1 & \textbf{1.784} & 0.507 & 1997 & 1.00 \\
 & Hyperboloid & 0.1 & 1.827 & 0.505 & 1350 & 1.00 \\
\addlinespace[0.5em]
Windsurfers & Spherical & 0.3 & 2.203 & 0.717 & 581 & 1.00 \\
 & Euclidean & 0.3 & 2.179 & 0.711 & 805 & 1.00 \\
 & Hyperboloid & 0.1 & \textbf{2.156} & 0.709 & 335 & 1.01 \\
\addlinespace[0.5em]
5-SBM & Spherical & 0.2 & 0.921 & 0.258 & 219 & 1.01 \\
 & Euclidean & 0.4 & \textbf{0.824} & 0.244 & 327 & 1.02 \\
 & Hyperboloid & 0.2 & 0.837 & 0.248 & 280 & 1.01 \\
\addlinespace[0.5em]
Moreno train 9/11 & Spherical & 0.2 & 1.169 & 0.304 & 1897 & 1.00 \\
 & Euclidean & 0.3 & 1.152 & 0.288 & 901 & 1.00 \\
 & Hyperboloid & 0.1 & \textbf{1.070} & 0.257 & 141 & 1.03 \\
\addlinespace[0.5em]
Karate & Spherical & 0.1 & \textbf{1.933} & 0.574 & 1517 & 1.00 \\
 & Euclidean & 0.1 & 1.972 & 0.579 & 2162 & 1.00 \\
 & Hyperboloid & 0.1 & 2.071 & 0.606 & 11 & 1.27 \\
\addlinespace[0.5em]
Iceland & Spherical & 0.1 & 0.762 & 0.166 & 1939 & 1.00 \\
 & Euclidean & 0.1 & \textbf{0.739} & 0.164 & 3668 & 1.00 \\
 & Hyperboloid & 0.1 & 0.827 & 0.165 & 1519 & 1.01 \\
\addlinespace[0.5em]
Les Miserables & Spherical & 0.2 & 1.028 & 0.247 & 1391 & 1.00 \\
 & Euclidean & 0.2 & 0.995 & 0.252 & 1571 & 1.00 \\
 & Hyperboloid & 0.1 & 0.940 & 0.228 & 502 & 1.01 \\
\addlinespace[0.5em]
Dolphins & Spherical & 0.2 & \textbf{1.071} & 0.290 & 1231 & 1.00 \\
 & Euclidean & 0.3 & 1.079 & 0.291 & 1268 & 1.00 \\
 & Hyperboloid & 0.1 & 1.141 & 0.300 & 1120 & 1.00 \\
\bottomrule
\end{tabular}
\end{table}



\end{document}